\def\eqref#1{equation~\ref{#1}}
\def\1{\bm{1}}
\def\ra{{\textnormal{a}}}
\def\rx{{\textnormal{x}}}
\def\rva{{\mathbf{a}}}
\def\erva{{\textnormal{a}}}
\def\ervx{{\textnormal{x}}}
\def\rmA{{\mathbf{A}}}
\def\vmu{{\bm{\mu}}}
\def\vtheta{{\bm{\theta}}}
\def\va{{\bm{a}}}
\def\ve{{\bm{e}}}
\def\vx{{\bm{x}}}
\def\eva{{a}}
\def\mA{{\bm{A}}}
\def\mH{{\bm{H}}}
\def\mI{{\bm{I}}}
\def\mJ{{\bm{J}}}
\def\mX{{\bm{X}}}
\def\mSigma{{\bm{\Sigma}}}
\DeclareMathAlphabet{\mathsfit}{\encodingdefault}{\sfdefault}{m}{sl}
\SetMathAlphabet{\mathsfit}{bold}{\encodingdefault}{\sfdefault}{bx}{n}
\newcommand{\tens}[1]{\bm{\mathsfit{#1}}}
\def\tA{{\tens{A}}}
\def\tX{{\tens{X}}}
\def\gA{{\mathcal{A}}}
\def\gG{{\mathcal{G}}}
\def\sA{{\mathbb{A}}}
\def\sB{{\mathbb{B}}}
\def\sS{{\mathbb{S}}}
\def\emA{{A}}
\newcommand{\etens}[1]{\mathsfit{#1}}
\def\etA{{\etens{A}}}
\newcommand{\E}{\mathbb{E}}
\newcommand{\R}{\mathbb{R}}
\newcommand{\KL}{D_{\mathrm{KL}}}
\newcommand{\Var}{\mathrm{Var}}
\newcommand{\Cov}{\mathrm{Cov}}
\newcommand{\normltwo}{L^2}
\newcommand{\normlp}{L^p}
\newcommand{\parents}{Pa} 
\DeclareMathOperator*{\argmax}{arg\,max}
\newcommand{\ts}{\textsuperscript}
\newtheorem{definition}{Definition}
\newtheorem{proposition}{Proposition}
\newtheorem{theorem}{Theorem}
\newcommand{\tj}[1]{{\color{blue}[tj: #1]}}
\title{More Efficient Exploration with Symbolic Priors on Action Sequence Equivalences}
\author{Toby Johnstone\thanks{Equal contribution}\\
Inria, Scool Team\\
Ecole Polytechnique\\
\texttt{toby.johnstone@polytechnique.edu}
\And
Nathan Grinsztajn$^*$ \\
Inria, Scool Team\\
CRIStAL, CNRS, Université de Lille\\
\texttt{nathan.grinsztajn@inria.fr}\\
\And
Johan Ferret\\
Google Research, Brain Team\\
Inria, Scool Team\hphantom{\texttt{stone@polytechnique.edu}}\\
\And
Philippe Preux\\
Inria, Scool Team\\
CRIStAL, CNRS, Université de Lille\\
}
\begin{document}

\maketitle

\begin{abstract}
Incorporating prior knowledge in reinforcement learning algorithms is mainly an open question. Even when insights about the environment dynamics are available, reinforcement learning is traditionally used in a \emph{tabula rasa} setting and must explore and learn everything from scratch.
In this paper, we consider the problem of exploiting priors about action sequence equivalence: that is, when different sequences of actions produce the same effect.
We propose a new local exploration strategy calibrated to minimize collisions and maximize new state visitations. We show that this strategy can be computed at little cost, by solving a convex optimization problem.
By replacing the usual $\epsilon$-greedy strategy in a DQN, we demonstrate its potential in several environments with various dynamic structures.
\end{abstract}

\section{Introduction}

Despite the rapidly improving performance of Reinforcement Learning (RL) agents on a variety of tasks \citep{Mnih2015,silver2016}, they remain largely sample-inefficient learners compared to humans \citep{toromanoff2019deep}. Contributing to this is the vast amount of prior knowledge humans bring to the table before their first interaction with a new task, including an understanding of physics, semantics, and affordances~\citep{dubey2018investigating}. 

The considerable quantity of data necessary to train agents is becoming more problematic as RL is applied to ever more challenging and complex tasks. Much research aims at tackling this issue, for example through transfer learning~\citep{Rusu2016ProgressiveNN}, meta learning, and hierarchical learning,  where agents are encouraged to use what they learn in one environment to solve a new task more quickly. Other approaches attempt to use the structure of Markov Decision Processes (MDP) to accelerate learning without resorting to pretraining. \citet{mahajan2017symmetry} and \citet{ Biza2018} learn simpler representations of MDPs that exhibit symmetrical structure, while \citet{Pol2020} show that environment invariances can be hard-coded into equivariant neural networks.

A fundamental challenge standing in the way of improved sample efficiency is exploration. We consider a situation where the exact transition function of a Markov Decision Process is unknown, but some knowledge of its local dynamics is available under the form of a prior expectation that given sequences of actions have identical results. This way of encoding prior knowledge is sufficiently flexible to describe many useful environment structures, particularly when actions correspond to agent movement. For example, in a gridworld (called RotationGrid hereafter) where the agent can move forward ($\uparrow$) and rotate 90\degree to the left ($\curvearrowleft$) or to the right ($\curvearrowright$), the latter two actions are the \emph{inverse} of each other, in that performing one undoes the effect of the other. During exploration, to encourage the visitation of not yet seen states, it is natural to simply ban sequences of actions that revert to previously visited states, following the reasoning of Tabu search~\citep{GLOVER1986}. We observe further that $\curvearrowright\curvearrowright$ and $\curvearrowleft\curvearrowleft$ both lead to the same state (represented as state 4 in Figure~\ref{fig:pipeline}). If actions were uniformly sampled, the chances of visiting this state would be much higher than any of the others. Based on these observations, we introduce a new method taking advantage of Equivalent Action SEquences for Exploration (EASEE), an overview of which can be found in Figure~\ref{fig:pipeline}. EASEE looks ahead several steps and calculates action sampling probabilities to explore as uniformly as possible new states conditionally on the action sequence equivalences given to it. It constructs a partial MDP which corresponds to a local representation of the true MDP around the current state. We then formulate the problem of determining the best distribution over action sequences as a linearly constrained convex optimization problem. Solving this optimization problem is computationally inexpensive and can be done once and for all before learning begins, providing a principled and tractable exploration policy that takes into account environment structure. This policy can easily be injected into existing reinforcement learning algorithms as a substitute for $\epsilon$-greedy exploration.

\begin{figure}
  \centering
  \includegraphics[width=\textwidth]{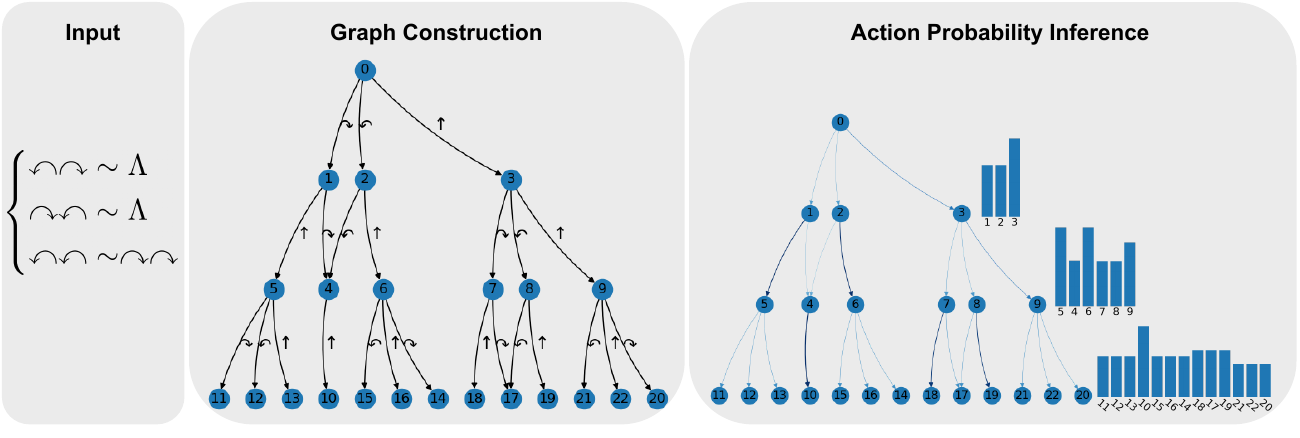}
  \caption{Illustration of EASEE on RotationGrid environment. The input is information about the dynamics of the environment known in advance under the form of action sequence equivalences ($\Lambda$ denotes the empty action sequence). This is used to construct a representation of all the unique states that can be visited in $3$ steps. The probabilities of sampling each action are then determined to explore as uniformly as possible. The probabilities of visiting each unique state are displayed on the right.}
  \label{fig:pipeline}
\end{figure}


Our contribution is threefold. First, we formally introduce the notion of equivalent action sequences, a novel type of structure in Markov Decision Processes. Then, we show that priors on this type of structure can easily be exploited during offline exploration by solving a convex optimization problem. Finally, we provide experimental insights and show that incorporating EASEE into a DQN~\citep{Mnih2015} improves agent performance in several environments with various structures.

\section{Related Work}

\paragraph{Curiosity-driven Exploration}
The problem of ensuring that agents see sufficiently diverse states has received a lot of attention from the RL community. Many methods rely on intrinsic rewards~\citep{Schmidhuber1991,Chentanez2004Intrinsic,BartoIntrinsic2006, LopesCount,Bellemare2016,OstrovskiCount, PathakCuriosity} to entice agents to unseen or misunderstood areas. In the tabular setting, these take the form of count-based exploration bonuses which guide the agent toward poorly visited states (\textit{e.g.} \citet{STREHL2008}). Scaling this method requires the use of function approximators~\citep{Burda2018,Badia2020, Berliac2021}. Unlike EASEE, these methods necessitate the computation of non-stationary and vanishing novelty estimates, which require careful tuning to balance learning stability and exploration incentives. Moreover, because these bonuses are learned, and do not allow for the use of prior structure knowledge, they constitute an orthogonal approach to ours.

\paragraph{Redundancies in Trajectories}
The idea that different trajectories can overlap and induce redundancies in state visitation is used in \citet{pmlr-v129-leurent20a} and \citet{czech2020montecarlo} in the case of Monte-Carlo tree search. However, they require a generative model, and propose a new Bellman operator to update node values according to newly uncovered transitions rather than modifying exploration.
Closer to our work, \citet{Caselles2020} study structure in action sequences, but restrict themselves to commutative properties. Tabu search~\citep{GLOVER1986} is a meta-heuristic which uses knowledge of the past to escape local optima. It is popular for combinatorial optimization~\citep{Hertz2005}. Like our approach, it relies on a local structure: actions which are known to cancel out recent moves are deemed \textit{tabu}, and are forbidden for a short period of time. This prevents cycling around already found solutions, and thus encourages exploration. In \citet{Abramson2003}, tabu search is combined with reinforcement learning, using action priors. However, their method cannot make use of more complex action-sequence structure.

\paragraph{Maximum State-Visitation Entropy}
Our goal to explore as uniformly as possible every nearby state can be seen as a local version of the Maximum State-Visitation Entropy problem (MSVE)~\citep{Farias2003, Hazan2018, Lee2019, Guo2021}. MSVE formulates exploration as a policy optimization problem whose solution maximizes the entropy of the distribution of visited states. Although some of these works~\citep{Hazan2018, Lee2019, Guo2021} can make use of priors about state similarities, they learn a global policy and cannot exploit structure in action sequences.

\paragraph{Action Space Structure}
The idea of exploiting structure in action spaces is not new. Large discrete action spaces may be embedded in continuous action spaces either by leveraging prior information~\citep{dulacarnold2016deep} or learning representations~\citep{chandak2019learning}. \citet{tavakoli2019action} manage high-dimensional action spaces by assuming a degree of independence between each dimension. These methods aim to improve the generalization of policies to unseen actions rather than enhancing exploration. \citet{tennenholtz2019natural} provide an understanding of actions through their context in demonstrations. \citet{farquhar2019growing} introduce a curriculum of progressively growing action spaces to accelerate learning.  Certain characteristics of actions, such as reversibility, can be learned through training, as in \citet{grinsztajn2021turning}.

\section{Formalism}


    
\subsection{Equivalence over Action Sequences}
\label{subsection:equivalence_over_action_sequences}

We consider a \emph{Markov Decision Process} (MDP) defined as a 5-tuple $\mathcal{M} = (\mathcal{S},\mathcal{A},T,R, \gamma)$, with $\mathcal{S}$ the set of states, $A$ the action set, $T$ the transition function, $R$ the reward function and the discount factor $\gamma$. The set of actions is assumed to be finite $|\mathcal{A}| < \infty$. We restrict ourselves to deterministic MDPs. A possible extension to MDPs with stochastic dynamics is discussed in Appendix~\ref{appendix:stochastic_case}.

In the following, the notations are borrowed from formal language theory. Sequences of actions are analogous to strings over the set of symbols $\mathcal{A}$ (possible actions). The set of all possible sequences of actions is denoted $\mathcal{A}^\star = \bigcup_{k = 0}^\infty \mathcal{A}^k$ where $\mathcal{A}^k$ is the set of all sequences of length $k$ and $\mathcal{A}^0$ contains as single element the empty sequence $\Lambda$. We use $.$ for the concatenation operator, such that for $v_1 \in \mathcal{A}^{h_1}, v_2 \in \mathcal{A}^{h_2}, v_1 . v_2 \in  \mathcal{A}^{h_1 + h_2}$. The transition function $T: \mathcal{S}\times \mathcal{A} \rightarrow \mathcal{S}$ gives the next state $s'$ when action $a$ is taken in state $s$: $T(s,a) = s'$. We recursively extend this operator to action sequences $T: \mathcal{S} \times \mathcal{A}^\star \rightarrow \mathcal{S}$ such that, $\forall s \in \mathcal{S}, \forall a \in \mathcal{A}, \forall w \in \mathcal{A}^\star$: 
\begin{align*}
    T(s, \Lambda) &= s \\
    T(s, w . a) &= T(T(s, w),a)
\end{align*}
Intuitively, this operator gives the new state of the MDP after a sequence of actions is performed from state $s$.

\begin{definition}[Equivalent sequences]
\label{def:seq_eq}
We say that two action sequences $a_1 \dots a_n$ and $a'_1 \dots a'_m \in \mathcal{A}^\star$ are equivalent at state $s \in \mathcal{S}$ if 
\begin{equation}
T(s,a_1 \dots a_n) = T(s,a'_1 \dots a'_m)
\end{equation}

Two sequences of actions are equivalent over $\mathcal{M}$ if they are equivalent at state $s$ for all $s$ in $\mathcal{S}$. This is written:
\begin{equation}
 a_1 \dots a_n \sim_\mathcal{M} a'_1 \dots a'_m
 \end{equation}
\end{definition}

This means that we consider two sequences of actions to be equivalent when following one or the other will always lead to the same state. When the considered MDP $\mathcal{M}$ is unambiguous, we simplify the notation by writing $\sim$ instead of $\sim_\mathcal{M}$.

We argue that some priors about the environments can be easily encoded as a small set of action sequence equivalences. For example, we may know that going left then right is the same thing as going right then left, that rotating two times to the left is the same thing as rotating two times to the right, or that opening a door twice is the same thing as opening the door once. All these priors can be encoded as a set of equivalences: 

\begin{definition}[Equivalence set]

Given a MDP $\mathcal{M}$ and several equivalent sequence pairs $v_1 \sim w_1, v_2 \sim w_2, \dots, v_n \sim w_n$, we say that $\Omega = \{ \{v_1, w_1\}, \{v_2, w_2\}, \dots \{v_n, w_n\} \}$ is an \textit{equivalence set} over $\mathcal{M}$. 
\end{definition}
 Formally, $\Omega$ is a set of pairs of elements of $ \mathcal{A}^\star$, such that $\Omega \subset \left(\mathcal{A}^\star \right) ^2 $. By abuse of notation, we write $v \sim w \in \Omega$ if $\{v, w\} \in \Omega$.

Intuitively, it is clear that action sequence equivalences can be combined to form new, longer equivalences. For example, knowing that going left then right is the same thing as going right then left, we can deduce that going two times left then two times right is the same thing as going two times right then two times left. In the same fashion, if opening a door twice produces the same effect as opening it once, opening three times the door does the same. We formalize these notions in what follows. First, we note that equivalent sequences can be concatenated.

\begin{proposition}
\label{prop:concatenation}
    If we have two pairs of equivalent sequences over $\mathcal{M}$, i.e. $w_1$, $w_2$, $w_3$, $w_4 \in \mathcal{A}^\star$ such that 
    $$ w_1 \sim w_2$$
    $$ w_3 \sim w_4$$
    
    then the concatenation of the sequences are also equivalent sequences:
    $$ w_1 \cdot w_3 \sim w_2 \cdot w_4$$
\end{proposition}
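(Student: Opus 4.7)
The plan is to prove the proposition by a direct substitution argument, after first establishing a small auxiliary lemma that extends the recursive definition of $T$ from the case $T(s, w \cdot a)$ (concatenating a single action) to the case $T(s, u \cdot v)$ (concatenating two arbitrary action sequences). The core identity to prove is
\[
    T(s, u \cdot v) \;=\; T(T(s, u), v) \qquad \forall s \in \mathcal{S},\ \forall u, v \in \mathcal{A}^\star .
\]
Once this is available, the proposition follows almost immediately from unfolding both sides using the two equivalence hypotheses.

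First I would prove the auxiliary lemma by induction on the length of $v$. The base case $v = \Lambda$ holds since $u \cdot \Lambda = u$ and $T(T(s,u), \Lambda) = T(s,u)$ by the first recursive clause in the definition. For the inductive step, write $v = v' \cdot a$ with $a \in \mathcal{A}$. Then $u \cdot v = u \cdot v' \cdot a$, and
\[
    T(s, u \cdot v' \cdot a) = T\bigl(T(s, u \cdot v'), a\bigr) = T\bigl(T(T(s,u), v'), a\bigr) = T(T(s,u), v' \cdot a),
\]
where the first and third equalities use the recursive definition of $T$ applied to a single trailing action, and the middle equality uses the induction hypothesis on $v'$.

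Next I would deduce the proposition. Fix an arbitrary state $s \in \mathcal{S}$. Using the lemma and the hypothesis $w_1 \sim w_2$ (which gives $T(s, w_1) = T(s, w_2)$), then the hypothesis $w_3 \sim w_4$ applied at the state $T(s, w_2)$, and finally the lemma again in reverse, we obtain
\[
    T(s, w_1 \cdot w_3) = T(T(s,w_1), w_3) = T(T(s,w_2), w_3) = T(T(s,w_2), w_4) = T(s, w_2 \cdot w_4).
\]
Since $s$ was arbitrary, Definition~\ref{def:seq_eq} yields $w_1 \cdot w_3 \sim w_2 \cdot w_4$.

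The only non-trivial step is the auxiliary lemma, and even this is a routine induction; the rest is a chaining of equalities. The one point that deserves care is that the equivalence $w_3 \sim w_4$ must be applied not at the original state $s$ but at $T(s, w_2)$, which is legitimate because equivalence is defined to hold at every state of $\mathcal{S}$ — this is precisely why the global (rather than state-wise) notion of equivalence is the right one for the concatenation property to go through.
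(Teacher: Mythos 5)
Your proof is correct and follows essentially the same route as the paper: fix an arbitrary state $s$, use $w_1 \sim w_2$ to get $T(s,w_1) = T(s,w_2)$, then apply $w_3 \sim w_4$ at that intermediate state and rewrite via $T(s, u \cdot v) = T(T(s,u), v)$. The only difference is that you prove this composition identity explicitly by induction on the length of $v$, whereas the paper uses it implicitly without justification --- a small but welcome increase in rigor.
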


The proof is given in Appendix~\ref{appendix:proposition_concatenation}. We are now going to define formally the fact that the equivalence of two sequences can be deduced from an equivalence set $\Omega$. We first consider the previous example where an action $a$ has the effect of opening a door, such that $a . a \sim a$. We can then write $a . a . a \sim (a . a) . a \sim (a) . a \sim a . a \sim a$ by applying two times the equivalence $a . a \sim a$ and rearranging the parentheses. More generally and intuitively, the equivalence of two action sequences $v$ and $w$ can be deduced from $\Omega$, which we denote $v \sim_\Omega w$, if $v$ can be changed into $w$ iteratively, chaining equivalences of $\Omega$.

More formally, we write $v \sim^{1}_\Omega w$ if $v$ can be changed to $w$ in one step, meaning:

\begin{equation}\label{eq:sim_1}
\exists u_1, u_2, v_1, w_1 \in \mathcal{A}^\star \text{such that}
    \left\{ \begin{aligned}
& v = u_1 . v_1 . u_2 \\
& w =  u_1 . w_1 . u_2  \\
& v_1 \sim w_1 \in \Omega \\
 \end{aligned} \right.
\end{equation}

For $n \geq 2$, we say that $v$ can be changed into $w$ in $n$ steps if there is a sequence $v_1, \dots, v_n \in \mathcal{A}^\star$ such that $v \sim^{1}_\Omega v_1 \sim^{1}_\Omega \dots \sim^{1}_\Omega v_n  =w$. Finally, we say that $v \sim_\Omega w$ if there is $n \in \mathbb{N}$ such that $v$ can be changed into $w$ in $n$ steps. The relation $\sim_\Omega$ is thus a formal way of extending equivalences from a fixed equivalence set $\Omega$, and at first glance not connected with $\sim$, which deals with the equivalences of the MDP dynamics. We now show a connection between the two notions.

\begin{theorem}
Given an equivalence set $\Omega$, $\sim_\Omega$ is an equivalence relationship. Furthermore, for $v, w \in \mathcal{A}^\star$, $v \sim_\Omega w \Rightarrow v \sim w $.
\end{theorem}

The proof is given in Appendix~\ref{appendix:equivalence_relation}. Given this relation between $\sim$ and $\sim_\Omega$, we will simplify the notation in what follows by writing $\sim$ instead of $\sim_\Omega$ when the equivalence set considered is unambiguous. As $\sim_\Omega$ is an equivalence relationship, it provides a partition over action sequences: two action sequences in the same set lead to the same final state from any given state.

\subsection{Local-dynamics Graph}
\label{subsection:local_state_graph}

We leverage the equivalences defined above to determine a model of the MDP up to a few timesteps. As traditionally done in Monte-Carlo Tree Search~\citep{Coulom2006}, an MDP $(\mathcal{S},\mathcal{A},T,R, \gamma)$ with deterministic dynamics can be locally unrolled to produce a tree, where a node of depth $h$ represents a sequence of actions $v \in \mathcal{A}^h$, and the edges represent transitions between such sequences. The root of the tree corresponds to the empty action sequence $\Lambda$. Here we adopt the same formalism, except that equivalent sequences will point to the same node. 

Given a tree $\mathcal{T}$ of depth $d \in \mathbb{N} $ corresponding to a partial unrolling of sequences in $\mathcal{A}^\star$, and an equivalence set $\Omega$, we call \textit{local-dynamics graph} of depth $d$ under equivalence $\Omega$ the graph $\gG = (V, E)$ corresponding to the tree $\mathcal{T}$ where nodes are quotiented with the equivalence relation $\sim_\Omega$. Intuitively, it means that nodes corresponding to equivalent action sequences are merged. In this case, the resulting graph is not necessarily a tree. In the following, unless the distinction is necessary, we identify action sequences with their equivalence classes.

The graph $\gG$ gives rise to a new, smaller MDP resulting from $\mathcal{M}$: the state space $V$ is the set of action sequences smaller than $d$ quotiented by the equivalence relation $\sim_\Omega$, the action space $\mathcal{A}$ is untouched. Given a node $n$ corresponding to a sequence $w \in \mathcal{A}^\star$, and an action $a \in \mathcal{A}$, $T(n, a)$ is the node representing the sequence $w . a \in \mathcal{A}^\star$. Nodes representing sequences of length exactly $d$ are \emph{final states}. The initial state $v_0$ is the empty sequence $\Lambda$.
This MDP represents the local dynamics induced by $\sim_\Omega$ from a given root state. We detail in the next section how to construct such graphs in practice, and how to use these sub-MDPs for a better exploration. 




\section{Equivalent Action SEquences for Exploration (EASEE)}

\subsection{From Equivalent Actions to Local-dynamics Graph}

Producing the local-dynamics graph involves considering all possible action sequences and merging those that are equivalent. Figure~\ref{fig:graph_construction} illustrates the construction of a local-dynamics graph, given $\mathcal{A} = \{a_1, a_2\}$ and $\Omega = \{ a_1 a_1 \sim \Lambda, a_2 a_1 \sim a_1 a_2\}$. Starting from the root node $0$ (first step), we iteratively expand the graph by unrolling the nodes at the edges of the graph. Steps 2 and 3 create nodes 1 and 2 corresponding to action sequences $a_1$ and $a_2$ respectively.
In a tree, the expansion of a node corresponding to a sequence $w \in \mathcal{A}^h$ with the action $a\in \mathcal{A}$ always leads to the creation of a new leaf that results from the sequence of actions $w.a \in \mathcal{A}^{h+1}$. However, in a local-dynamics graph the node representing $w.a$ might already be present, in which case we add an edge from $w$ without creating a new node. In Figure~\ref{fig:graph_construction}, this case occurs at the 4\ts{th} and the 6\ts{th} construction steps. The first case corresponds to adding the action $a_1$ to the node $1$, which represents the action sequence $a_1 . a_1$. Since $a_1 . a_1 \sim \Lambda \in \Omega$, we simply add an edge from $1$ to $0$. The second case occurs when expanding node $2$ with the action $a_1$, leading to the action sequence $a_2 . a_1 \sim_\Omega a_1 . a_2$. Since node $3$ already represents $a_1 . a_2$, we simply add an edge from node $2$ to node $3$. As a final construction step, we prune edges which go backward in the local-dynamics graph, like (1, 0) in Fig.~\ref{fig:graph_construction}, such that the resulting graph is a DAG. This is motivated by the fact that we are interested in finding a good exploration policy: an action which takes us back to a previously visited state should be ignored.

\begin{figure}
  \centering
  \includegraphics[width=\textwidth]{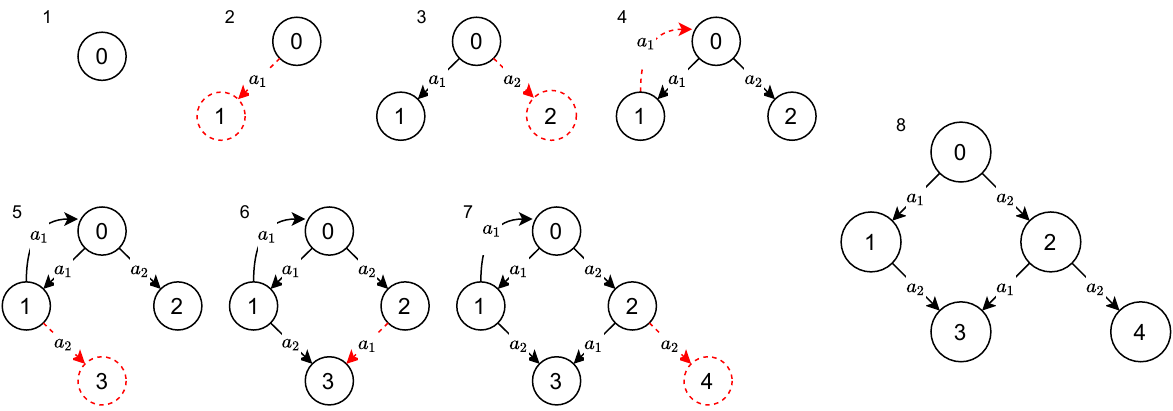}
  \caption{Example of iterative graph construction with $\Omega = \{ a_1 a_1 \sim \Lambda, a_2 a_1 \sim a_1 a_2\}$ and a maximum depth of 2. The 8\textsuperscript{th} construction step corresponds to the pruning of the edge (1, 0).}
  \label{fig:graph_construction}
\end{figure}

From a practical point of view, the graph construction algorithm
takes as input the action set $\mathcal{A}$, the sequence equivalence set $\Omega$, and the desired depth $d$, and outputs a DAG. Informally, it starts from a graph $\mathcal{G} = (V, E)$ reduced to a root state $\{0\}$ and iteratively expands $\mathcal{G}$ until a distance $d$ to the root is reached. We store in each node every action sequence which allows to reach it from any parent nodes. For example, in Fig.\ref{fig:graph_construction}, the node $3$ can be reached from the node $0$ with sequences $\{a_1 . a_2, a_2 . a_1\}$, from node $1$ with sequence $\{a_2\}$ and from node $2$ with sequence $\{a_1\}$, thus the set of sequences stored in node $3$ would be $\{a_1, a_2, a_1 . a_2, a_2 . a_1\}$. When expanding a node $n$ with an action $a\in \mathcal{A}$, we check every sequence $w$ stored in $n$ if $w.a$ appears in $\Omega$, and if a node corresponding to an equivalent sequence of $w.a$ is already in $V$. If it is the case, we simply add an edge from $n$ to this node, otherwise we create a new node representing $w.a$.
In practice, this algorithm can be refined: in each node, we only store sequences which are subsequences of sequences in $\Omega$. We provide a more detailed implementation of this algorithm in Appendix~\ref{appendix:graph_construction_algorithm}.

\begin{proposition}
    The complexity of this graph construction algorithm is upper bounded by $O\left(  \lvert\mathcal{A}\rvert^{2d} \lvert \Omega \rvert d \right)$.
\end{proposition}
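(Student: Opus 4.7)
The plan is to decompose the running time into three multiplicative factors --- the number of graph nodes, the number of sequences stored per node, and the cost of an equivalence lookup --- and then bound each one separately.

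First, I would bound the size of the graph. Since every node represents an equivalence class under $\sim_\Omega$ of some action sequence of length at most $d$, and there are at most $\sum_{k=0}^{d}|\mathcal{A}|^k = O(|\mathcal{A}|^d)$ such raw sequences, the graph satisfies $|V| = O(|\mathcal{A}|^d)$. The outer construction loop expands each node with each of the $|\mathcal{A}|$ actions, so the total number of expansion operations performed over the whole algorithm is $O(|\mathcal{A}|^{d+1})$.

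Second, I would bound the work performed inside a single expansion of a node $n$ by an action $a$. By the refinement described in the paragraph preceding the statement, the sequences kept in $n$ are restricted to contiguous subsequences of the elements of $\Omega$. Since $|\Omega|$ counts the number of equivalence pairs, and each sequence in $\Omega$ has length at most $d$, the number of such stored sequences per node is bounded by $O(|\Omega|\cdot d)$. For each stored $w$, we form $w.a$, test in $O(1)$ (via a hash table keyed on the sequences appearing in $\Omega$) whether it matches one side of an equivalence, and if so search among the existing nodes to see whether the equivalent sequence is already instantiated; this last step costs at worst $O(|V|) = O(|\mathcal{A}|^d)$ when no canonical hashing of equivalence classes is available.

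Combining the three bounds, and absorbing the $|\mathcal{A}|$ coming from the action iteration into the polynomial factor, the total running time is
\begin{equation*}
O\!\left(|\mathcal{A}|^{d}\right)\cdot O\!\left(|\Omega|\,d\right)\cdot O\!\left(|\mathcal{A}|^{d}\right) \;=\; O\!\left(|\mathcal{A}|^{2d}\,|\Omega|\,d\right),
\end{equation*}
which is exactly the claimed complexity. The main obstacle I expect will be verifying the per-node bound $O(|\Omega|\,d)$ on stored sequences: one has to argue that the algorithm's invariant --- namely that it only ever stores subsequences of elements of $\Omega$ --- is preserved under expansion, and that these stored sets are deduplicated, so that their cardinality does not silently blow up as new nodes are created. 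Once this invariant is pinned down, the product above is just a matter of bookkeeping.
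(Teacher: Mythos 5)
Your argument lands on the paper's bound but gets there by a genuinely different accounting, so it is worth comparing the two. The paper's proof multiplies four factors: the loop over nodes ($|V|\le|\mathcal{A}|^{d}$), the loop over actions ($|\mathcal{A}|$), the loop over partial sequences stored in the expanded node --- which it bounds crudely by $|\mathcal{A}|^{d-1}$, i.e.\ by the number of all possible sequences reaching that node --- and a cost of $O(|\Omega|\,d)$ for comparing one stored sequence against every sequence of $\Omega$ (each comparison costing up to $d$). You instead bound the stored sequences per node by $O(|\Omega|\,d)$ via the prefix-of-$\Omega$ refinement, which is a much tighter count in the typical regime, and you recover the second $|\mathcal{A}|^{d}$ factor from a pessimistic $O(|V|)$ linear scan when looking up whether the equivalent node already exists. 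Your route has the virtue of making explicit that with constant-time node lookup the algorithm actually runs in roughly $O(|V|\,|\mathcal{A}|\,|\Omega|\,d)$, which better explains why the construction is fast in practice; the paper's route avoids having to justify the invariant on stored sequences at all. Two loose ends in your version need tightening before the product equals the claimed bound. First, only nodes at depth strictly less than $d$ are expanded, so the number of expansion operations is $O(|\mathcal{A}|^{d})$ rather than $O(|\mathcal{A}|^{d+1})$; \emph{absorbing} the extra $|\mathcal{A}|$ is not legitimate since $|\mathcal{A}|$ is part of the input, and without this observation you only obtain $O(|\mathcal{A}|^{2d+1}|\Omega|\,d)$. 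Second, the algorithm stores \emph{prefixes} of sequences of $\Omega$, of which there are $O(|\Omega|\,d)$; if you genuinely allow all contiguous subsequences the count becomes $O(|\Omega|\,d^{2})$. With those two corrections your decomposition yields exactly $O(|\mathcal{A}|^{2d}|\Omega|\,d)$.
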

The proof is given in Appendix~\ref{appendix:graph_complexity}. It is to be noted that this upper bound is in general far larger than the actual number of operations. Indeed, it supposes that the number of nodes in the graph is $\lvert\mathcal{A}\rvert^{d}$, although it can be much smaller thanks to the redundancies induced by $\Omega$. A more precise formula is $O\left( \lvert V \rvert \lvert\mathcal{A}\rvert^{d}  \lvert \Omega \rvert \right)$, where $\lvert V \rvert$ is the number of nodes in the final graph and depends on the structure of $\Omega$. Despite this exponential theoretical complexity, the goal is to use this algorithm locally, thus for small depths. In practice we found that local-dynamics graphs could be computed within a few seconds on a standard laptop.

\subsection{From Local-dynamics Graph to Local Exploration Policy}
\label{subsection:graph_to_policy}
Once the local-dynamics graph $(V, E)$ has been constructed, our goal is to find a good local exploration policy in the resulting MDP as defined in Section~\ref{subsection:local_state_graph}. We recall that its set of states is $V$, and its actions dynamics are given by the edges $E$. Ideally, we would want to find a policy $\pi$ such that all nodes in the local-dynamics graph are visited equally often.

Given a policy $\pi$, a state $v\in V$ and an action $a \in \gA$, we denote $p_{\pi, t}(v)$ and $p_{\pi, t}(v, a)$ the $t$-steps state distribution and state-action distribution respectively. Formally, $p_{\pi, t}(v) = \mathbb{P}_\pi(v_t=v) $ and $p_{\pi, t}(v, a) = \mathbb{P}_\pi(v_t=v, a_t =a)$.

Ideally we would like each $t$-step state distribution to be uniform. However, depending on the exact local-dynamics graph this may or may not be possible. Instead, following the principle of maximum entropy~\citep{jaynes57}, we frame the objective of balancing the state distribution at step $t$ as maximizing $H([p_{\pi, t}(v_0), p_{\pi, t}(v_1), \dots, p_{\pi, t}(v_{\lvert V \rvert-1})] = H(p_{\pi, t})$, where $H$ is the Shannon entropy. For a local-dynamics graph of depth $d \in \mathbb{N}$, we define our global objective as maximizing $ J(\pi) = \tilde{J}(p_{\pi, 1}, \dots, p_{\pi, d}) = \frac{1}{d} \sum_{t=1}^d H(p_{\pi, t})$. Other global objectives are possible, for example optimizing entropy over only the final states, or some other weighted mixture. In practice, over simple experiments, we observed that changes in the entropy mixture hardly induced any variation in the computed policies and agent behavior.

Informally, our objective can be understood as maximizing state diversity locally, for every timestep smaller than $d$. For environments where additional priors about state interests are available, one could adapt the quantity $J$ to compute the entropy on a subset of the most interesting states, therefore biasing exploration toward promising areas.

We consider $\mathcal{K}$, the set of joint distributions $(p_0, p_1, \dots p_d)$ which verifies the following properties:

\begin{itemize}
    \item $\forall t \leq d, p_t(v, a) \geq 0$
    \item $\forall v \in V, \sum_{a\in \mathcal{A}} p_0(v, a) = p_0(v) = \mathbb{1}_{v_0}(v)$
    \item $\forall t < d, \sum_{a\in \mathcal{A}} p_{t+1}(v, a) = \sum_{v'\in V, a\in \mathcal{A}} p_t(v', a) \mathbb{P}(v \mid v', a)$
\end{itemize}

We denote $D(\mathcal{A})$ the set of distributions over $\mathcal{A}$. From any $(p_0, p_1, \dots, p_d) \in \mathcal{K}$, it is possible to find a time-dependent policy $\pi: V \times \{0, \dots, d\} \rightarrow D(\mathcal{A}) $ such that 
$p_0 = p_{\pi, 0}, p_1 = p_{\pi, 1}, \dots, p_d = p_{\pi, d}$, and for any policy $\pi$ we have $(p_{\pi, 0}, p_{\pi, 1}, \dots, p_{\pi, t}) \in \mathcal{K}$ \citep{puterman2014}.

As the entropy $H$ is concave, the function $\tilde{J}$ is a concave function over $\mathcal{K}$. Moreover, the constraints defining $\mathcal{K}$ are linear. Therefore,
\begin{equation}
\label{eq:j_tilde}
    \max_{(p_1, \dots, p_d) \in \mathcal{K}} \tilde{J}(p_1, \dots, p_n) 
\end{equation}

can be solved efficiently using any convex solver. In our implementation, we use
CVXPY \citep{diamond2016cvxpy, agrawal2018rewriting}.
Once $(p_1^\star, \dots, p_d^\star) = \argmax_\mathcal{K} \tilde{J}$ is computed, we can immediately calculate a time-dependent policy $\pi^\star$ from such a distribution \citep{puterman2014} with:
\begin{equation}
    \pi_t^\star(v, a) = \frac{p_t^\star(v, a)}{p_t^\star(v)}
\end{equation}

As the local-dynamics graph $(V, E)$ is a DAG, the set of nodes $V_0, V_1, \dots, V_d$ which can be reached respectively at timesteps $t=0, t=1, \dots, t=d$ are disjoint. Therefore any time-dependent policy defined on $V$ can be framed as a stationary policy. Considering for example $\pi^\star$, we can write $\pi^\star(v, \cdot) = \pi_0^\star(v,\cdot) $ if $v\in V_0$, $\pi^\star(v,\cdot) = \pi_1^\star(v,\cdot) $ if $v\in V_1, \dots,$ and $ \pi^\star(v,\cdot) = \pi_d^\star(v,\cdot) $ if $v\in V_d$.

\subsection{From Local Exploration to Global Policy}

The optimal $\pi^\star$ determined in the previous section can then be used to guide exploration. With an $\epsilon$-greedy policy, each step has a probability $\epsilon$ of being an exploration step, where an action is sampled uniformly. 
Instead, we keep in memory the local-dynamics graph, and initialize the current state at $v = \Lambda$. Everytime an action $a$ is performed, $v$ is updated such that $ v \leftarrow v . a$, and reinitialized to $\Lambda$ after a sequence of length $d$. At each exploration step, instead of sampling $a$ uniformly, EASEE samples $a$ according to the distribution $\pi^\star(v, \cdot)$. Pseudocode for this process can be found in Appendix~\ref{appendix:algorithm_dqn}.

\section{Results}

For every experiments, additional details about environments and hyperparameters are given in Appendix~\ref{appendix:experimentaldetails}.

\subsection{Pure Exploration}
\label{subsection:pure_expl}

To get a better understanding of EASEE, we consider two simple gridworld environments with different structures: CardinalGrid and RotationGrid.
These environments are both $100 \times 100$ gridworlds, but with different action structures. In CardinalGrid, the agent can move one square in the four cardinal directions ($ \rightarrow, \leftarrow, \uparrow, \downarrow $), whereas in RotationGrid, the agent can move either forward one square ($\uparrow$), or rotate 90\degree on the spot to the left ($\curvearrowleft$) or to the right ($\curvearrowright$). The agent starts in the middle of the grid and can explore for 100 timesteps, after which the environment is reset.

In CardinalGrid, we consider the 4 equivalence sets: 

\begin{itemize}
    \item $\left\{ \rightarrow \leftarrow \sim \leftarrow \rightarrow \right\}$ (`` $\rightarrow$ and $\leftarrow$ commute '')
    \item $\left\{ \rightarrow \leftarrow \sim \leftarrow \rightarrow,
    \uparrow \downarrow \sim \downarrow \uparrow \right\}$ (``all actions commute'')
    \item $\left\{ \rightarrow \leftarrow \sim \leftarrow \rightarrow,
    \uparrow \downarrow \sim \downarrow \uparrow,  \rightarrow \leftarrow \sim \Lambda \right\}$ (``all actions commute and $\rightarrow \leftarrow \sim \Lambda$'')
    \item $\left\{ \rightarrow \leftarrow \sim \leftarrow \rightarrow,
    \uparrow \downarrow \sim \downarrow \uparrow,  \rightarrow \leftarrow \sim \Lambda, \uparrow \downarrow \sim \Lambda
    \right\}$ (`` all actions commute and $\rightarrow \leftarrow \sim \uparrow \downarrow \sim \Lambda$ ''),
\end{itemize}

while in RotationGrid, we consider the three equivalence sets:

\begin{itemize}
    \item $\left\{ \curvearrowright \curvearrowleft \sim \Lambda \right\}$
    \item $\left\{ \curvearrowright \curvearrowleft \sim \Lambda, \curvearrowleft \curvearrowright \sim \Lambda \right\}$
    \item $\left\{ \curvearrowright \curvearrowleft \sim \Lambda, \curvearrowleft \curvearrowright \sim \Lambda,  \curvearrowright \curvearrowright \sim \curvearrowleft \curvearrowleft \right\}$.
\end{itemize}

Fig.~\ref{fig:experimental_intuition} shows the benefits of exploiting the structure of the action space for exploration. Figures~\ref{fig:grid_states}, \ref{fig:rotation_states} show the ratio of the number of unique states visited using EASEE over a standard uniform exploration policy. For both environments, a greater equivalence set leads to a more efficient exploration. In the environmentCardinalGrid for example, for a fixed depth of 6, adding the information that $\rightarrow$ and $\leftarrow$ commute and that every actions commute allow to reach respectively 10\% and 60\% more states in 100 episodes. Furthermore, extra equivalences encoding that $\rightarrow$ is the inverse of $\leftarrow$, and $\uparrow$ the inverse of $\downarrow$ increase the number of new states encountered threefold. It can also be seen that deeper graphs provide better exploration, which is expected: using deeper graphs results in exploiting equivalence priors over longer action sequences.

Figures~\ref{fig:barplot_grid}, \ref{fig:barplot_rotation} show the number of unique states visited with respect to the total number of episodes of exploration. We see that EASEE benefits exploration in all configurations considered: it allows the agent to visit more states within a single trajectory, and as well as across a thousand. It gives insight about the sample-efficiency gain which can be achieved using EASEE over a standard random policy. In the CardinalGrid setting, EASEE visits more unique states over 100 episodes than uniform exploration over 1000.

\begin{figure}[ht]
\centering
\begin{subfigure}{.5\textwidth}
  \centering
  \includegraphics[width=1\linewidth]{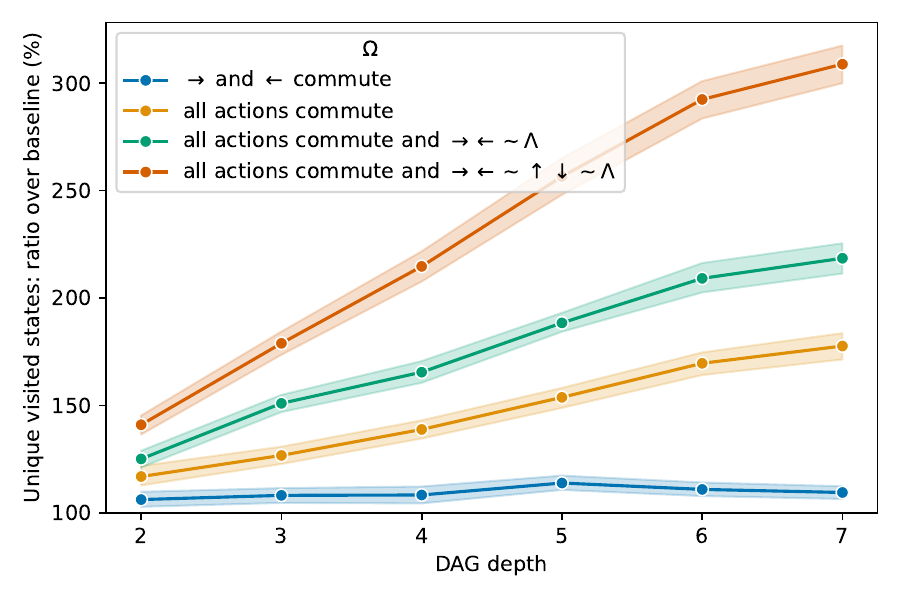}
  \caption{CardinalGrid}
  \label{fig:grid_states}
\end{subfigure}%
\begin{subfigure}{.5\textwidth}
  \centering
  \includegraphics[width=1\linewidth]{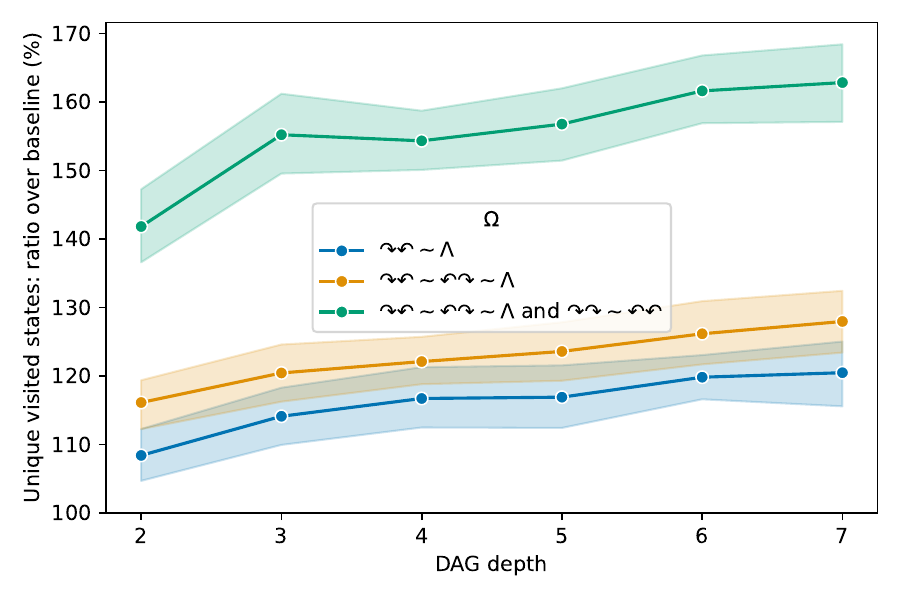}
  \caption{RotationGrid}
  \label{fig:rotation_states}
\end{subfigure}

\begin{subfigure}{.5\textwidth}
  \centering
  \includegraphics[width=1\linewidth]{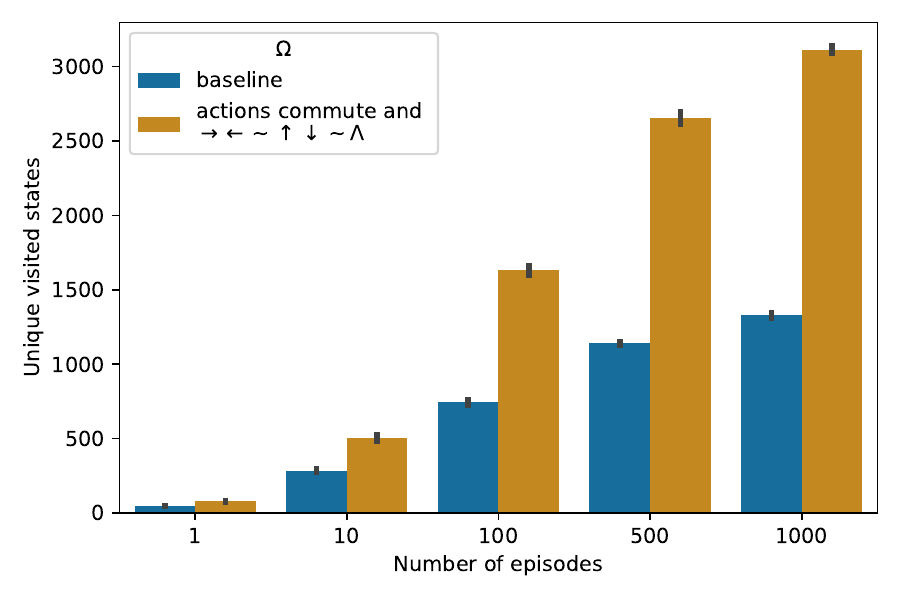}
  \caption{CardinalGrid}
  \label{fig:barplot_grid}
\end{subfigure}%
\begin{subfigure}{.5\textwidth}
  \centering
  \includegraphics[width=1\linewidth]{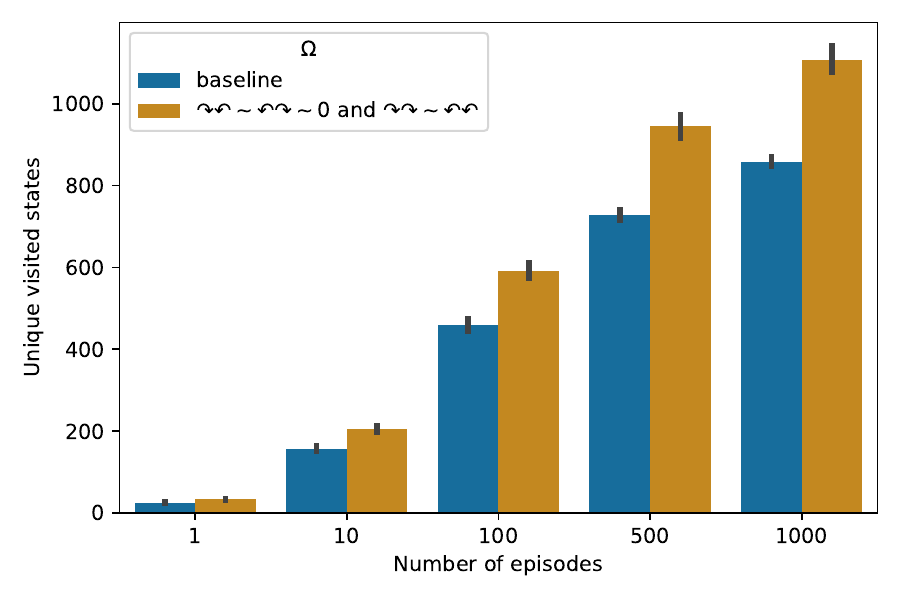}
  \caption{RotationGrid}
  \label{fig:barplot_rotation}
\end{subfigure}
\caption{\textbf{(a, b):} Ratio of the number of unique visited states during 100 episodes following EASEE over standard $\epsilon$-greedy policy, for different equivalence sets and depths in the environments CardinalGrid and RotationGrid respectively. \textbf{(c, d):} Number of unique visited states according to the number of episodes for EASEE with a fixed depth of 4 compared to standard $\epsilon$-greedy policy.}
\label{fig:experimental_intuition}
\end{figure}


\subsection{Minigrid}
\label{subsection:minigrid}
The Minimalistic Gridworld Environment (MiniGrid) is a suite of environments that test diverse capabilities in RL agents~\citep{gym_minigrid}. We evaluated the influence of adding EASEE to Q-learning on the DoorKey task. The environment is a gridworld split into two rooms separated by a locked door. The agent must collect a key to get to the objective in the other room.The dynamics of the environment are those of RotationGrid with two extra actions: the agent may \texttt{PICKUP} the key when facing it and \texttt{OPEN} the door when carrying the key. The EASEE version of the Q-learning assumes the following action sequence equivalences:
\begin{align*} 
    \curvearrowright  \curvearrowleft &\sim \Lambda \\
    \curvearrowleft   \curvearrowright &\sim \Lambda \\
    \curvearrowleft  \curvearrowleft  &\sim \curvearrowright   \curvearrowright \\
    \texttt{OPEN}  &\sim \texttt{OPEN} \cdot \texttt{OPEN}\\
    \texttt{PICKUP} &\sim \texttt{PICKUP} \cdot \texttt{PICKUP}
\end{align*}


The reward over this training is presented in Figure~\ref{fig:doorkey_train}. Using a depth of $6$, the EASEE augmented version outperforms classic Q-learning.

\subsection{Catcher}

We test EASEE on a game of Catcher, where the agent must catch a ball falling vertically with a paddle that can move left and right. It receives a reward of $+1$ when the ball is caught and $-1$ when it is missed. The prior we incorporate into the exploration is that the actions commute \textit{i.e.} $\leftarrow \rightarrow \sim \rightarrow \leftarrow$. For faster learning we restrict each episode to a single ball drop, with the agent starting in the middle of the environment.

We choose a depth of $30$ for EASEE. This is also the length of a single episode. The mean reward over training is plotted in Figure~\ref{fig:catcher}.

\begin{figure}[htb]
\centering
\begin{subfigure}{.33\textwidth}
  \centering
  \includegraphics[width=1\linewidth]{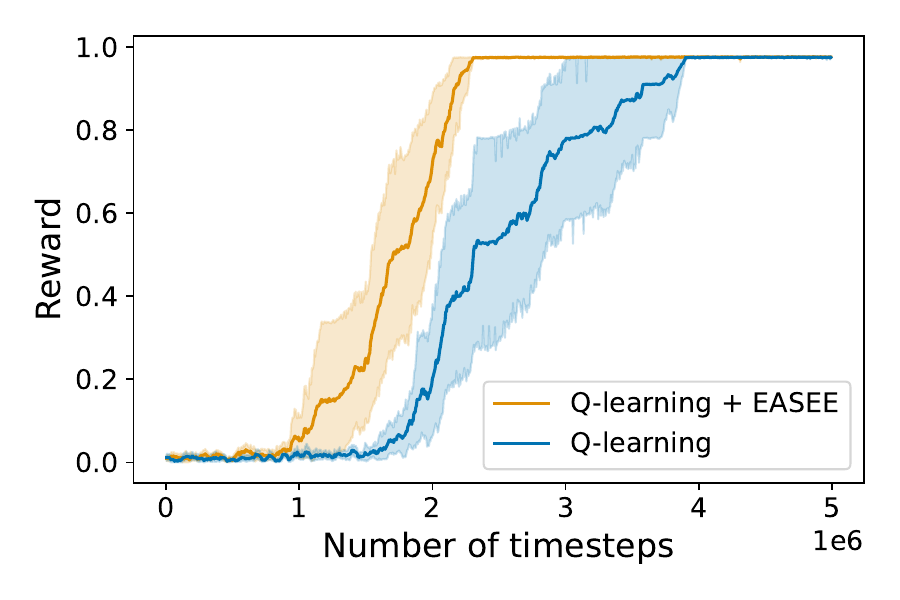}
  \caption{DoorKey (15 random seeds)}
  \label{fig:doorkey_train}
\end{subfigure}%
\centering
\begin{subfigure}{.33\textwidth}
  \centering
  \includegraphics[width=1\linewidth]{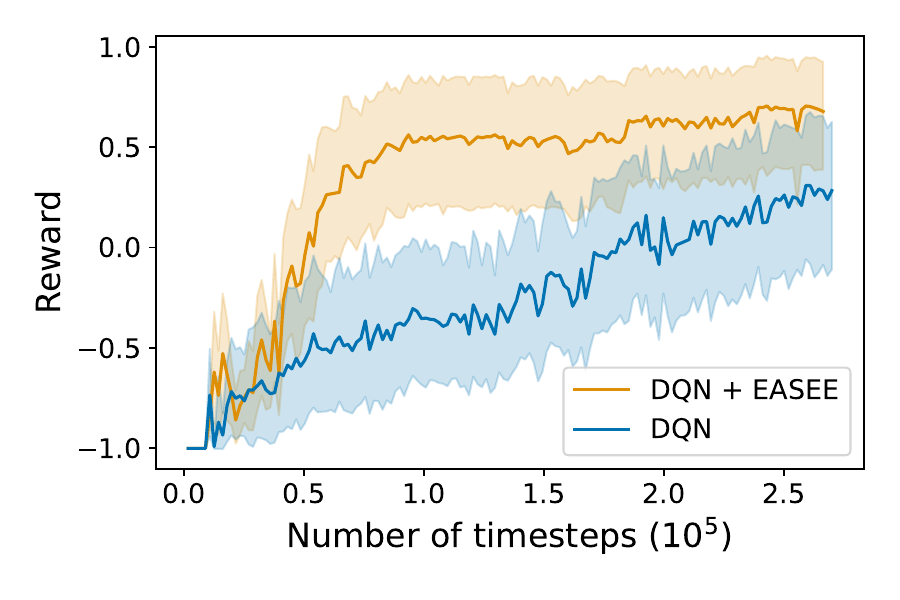}
  \caption{Catcher ($20$ random seeds)}
  \label{fig:catcher}
\end{subfigure}%
\begin{subfigure}{.33\textwidth}
  \centering
  \includegraphics[width=1\linewidth]{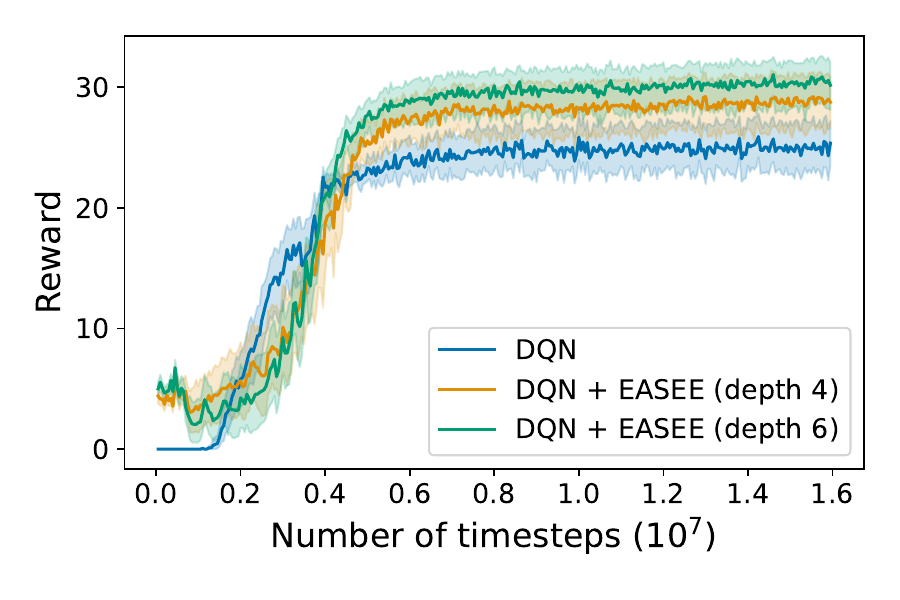}
  \caption{Freeway ($20$ random seeds)}
  \label{fig:freeway}
\end{subfigure}

\caption{Mean reward over training with 95$\%$ confidence intervals.}
\label{fig:experimentsCatcherFreeway}
\end{figure}

\subsection{Freeway}

We test our method on the Atari 2600 game Freeway \citep{bellemare2013}. The agent has to cross a road with multiple lanes without getting hit by the cars, and receives a reward when it reaches safety on the other side.
The action space is composed of 3 actions : moving forward of 1 lane ($\uparrow$), moving backward of 1 lane ($\downarrow$), and passing (--). As cars arrive randomly, it is not easy to find priors on action equivalences in this environment. Since passing and moving backwards can sometimes be useful to avoid cars we cannot forbid these actions. However, we have prior knowledge that performing these two actions does not lead to visiting new lanes. We restrict the use of these actions with $\Omega =
\{ \downarrow \sim  \downarrow  \downarrow , $ --$ \downarrow \sim \downarrow $--$\}$, which has the effect of removing every node which is reached by chaining two $\downarrow$ actions without moving forward, and compute the exploration policy on the remaining nodes. Results can be seen in Fig.~\ref{fig:freeway}. Interestingly, incorporating such a prior does not lead to better sample-efficiency, as in the previous environments, but to a better final policy.
\section{Discussion}

We assume that implementers of reinforcement learning agents can provide insights about the environment, despite not knowing its precise dynamics or optimal policy.

In this work, we argue that some of these insights can be efficiently represented using the notion of action-sequence equivalence, which we formalize. We propose a method to incorporate such priors in classic Q-learning algorithms and demonstrate empirically its ability to improve sample efficiency and performance. More precisely, our approach can be divided into two steps: first, the construction of a graph representing the local dynamics, and then the resolution of a convex optimization problem aiming to balance node visitation. We show that incorporating such prior knowledge can replace standard $\epsilon$-greedy and improve at little cost RL algorithms, which traditionally start from a \emph{tabula rasa} setting, learning everything from scratch. 

We expect EASEE to be robust to slight errors in the action sequence equivalence set. It may be that the states at the end of two sequences are not exactly the same but very similar, or that an action-sequence equivalence is verified at all but a few states. In such cases, the exploration policy we determine is not optimal, but should be much closer to optimality than uniformly sampling actions. 

We additionally experimented with EASEE on two other Atari games, where the improvements are less pronounced: the details are given in Appendix~\ref{appendix:additional_experiments}, as well as possible explanations.

\section*{Acknowledgements}
Experiments presented in this paper were partially carried out using the Grid'5000 testbed, supported by a scientific interest group hosted by Inria and including CNRS, RENATER and several Universities as well as other organizations. NG is a recipient of PhD funding from the AMX program, Ecole Polytechnique.
The authors would like to thank Edouard Leurent, Antoine Moulin, Léo Grinsztajn for helpful comments and suggestions.



\newpage

    

\bibliography{iclr2022_conference}
\bibliographystyle{iclr2022_conference}


\newpage

\appendix

We organize the supplementary material as follows. In Appendix~\ref{appendix:maths} we include the proofs of results from the main text, as well as additional details about the proposed algorithms, including pseudo-code. In Appendix~\ref{appendix:experimentaldetails} we detail our experimental procedure, including hyperparameters for all methods used.

\section{Technical Elements and Proofs}
\label{appendix:maths}

\subsection{Proof of Proposition 1}
\label{appendix:proposition_concatenation}

\setcounter{proposition}{0}
\setcounter{theorem}{0}

\begin{proposition}
    If we have two pairs of equivalent sequences over $\mathcal{M}$, i.e. $w_1$, $w_2$, $w_3$, $w_4 \in \mathcal{A}^\star$ such that 
    $$ w_1 \sim w_2$$
    $$ w_3 \sim w_4$$
    
    then the concatenation of the sequences are also equivalent sequences:
    $$ w_1 \cdot w_3 \sim w_2 \cdot w_4$$
\end{proposition}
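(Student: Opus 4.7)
The plan is to reduce everything to the (associativity-like) identity
\[
T(s, v \cdot w) \;=\; T(T(s, v), w) \qquad \text{for all } s \in \mathcal{S},\ v, w \in \mathcal{A}^\star,
\]
which is only given in the paper in the special case where $w$ is a single action. So first I would prove this identity by induction on the length $|w|$. For the base case $w = \Lambda$, both sides equal $T(s, v)$ since $v \cdot \Lambda = v$ and $T(T(s,v), \Lambda) = T(s,v)$ by the recursive definition. For the inductive step, write $w = w' \cdot a$ with $a \in \mathcal{A}$ and $|w'| = |w| - 1$. Using $v \cdot (w' \cdot a) = (v \cdot w') \cdot a$ (associativity of concatenation in $\mathcal{A}^\star$), the paper's recursive definition, and the inductive hypothesis, we get
\[
T(s, v \cdot w) = T(s, (v \cdot w') \cdot a) = T(T(s, v \cdot w'), a) = T(T(T(s,v), w'), a) = T(T(s,v), w' \cdot a) = T(T(s,v), w).
\]

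Once this identity is established, the proposition follows by a short chain of equalities. Fix an arbitrary state $s \in \mathcal{S}$. Applying the identity twice and using the two assumed equivalences at the appropriate states:
\[
T(s, w_1 \cdot w_3) = T(T(s, w_1), w_3) = T(T(s, w_2), w_3) = T(T(s, w_2), w_4) = T(s, w_2 \cdot w_4).
\]
The second equality uses $w_1 \sim w_2$ instantiated at $s$ (so $T(s,w_1) = T(s,w_2)$), and the third uses $w_3 \sim w_4$ instantiated at the state $T(s, w_2)$. Since $s$ was arbitrary, this yields $w_1 \cdot w_3 \sim w_2 \cdot w_4$ by Definition~\ref{def:seq_eq}.

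There is no real obstacle here: the proof is essentially bookkeeping. The only subtlety worth flagging is the implicit reliance on associativity of concatenation in $\mathcal{A}^\star$ and the need to promote the one-step recursion $T(s, w \cdot a) = T(T(s, w), a)$ to arbitrary sequences on the right, which is why the preliminary induction is unavoidable even though it looks trivial.
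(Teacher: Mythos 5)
Your proof is correct and follows essentially the same route as the paper's: fix an arbitrary $s$, apply $w_1 \sim w_2$ at $s$, then $w_3 \sim w_4$ at the resulting state, and conclude via the identity $T(s, v \cdot w) = T(T(s,v), w)$. The only difference is that you explicitly establish that identity by induction on $|w|$, whereas the paper uses it silently when rewriting $T(T(s,w_1),w_3)$ as $T(s, w_1 \cdot w_3)$ — a small but legitimate piece of bookkeeping that the paper omits.
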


\begin{proof}
For any $s \in \mathcal{S}$, we have $T(s, w_1) = T(s, w_2)$ as $ w_1 \sim w_2$. We apply the same property for $w_3$ and $w_4$ on the state $T(s, w_1)$:
\begin{align*}
    T(T(s, w_1), w_3) &= T(T(s, w_2), w_4) \\
    T(s, w_1 . w_3) &= T(s, w_2 . w_4)
\end{align*}
Therefore $w_1 . w_3 \sim w_2 . w_4$.
\end{proof}

\subsection{Proof of Theorem 1}
\label{appendix:equivalence_relation}

\begin{theorem}
Given an equivalence set $\Omega$, $\sim_\Omega$ is an equivalence relationship. Furthermore, for $v, w \in \mathcal{A}^\star$, $v \sim_\Omega w \Rightarrow v \sim w $.
\end{theorem}

\paragraph{$\sim_\Omega$ is an equivalence relation}
Let $u, v, w \in \mathcal{A}^\star$.

\begin{proof}\leavevmode
\begin{itemize}
    \item We immediately have $v \sim^{1}_\Omega v$ by choosing $v_1 = \Lambda$ in \eqref{eq:sim_1}, and therefore $v \sim_\Omega v$, thus $\sim_\Omega$ is reflexive.
    \item It is clear from its definition that $\sim^{1}_\Omega$ is symmetric, as $\sim$ is symmetric. Then, we suppose that $v \sim_\Omega w$. We have $n \in \mathbb{N}$ and $ v_1, \dots, v_n \in \mathcal{A}^\star$ such that $v \sim^{1}_\Omega v_1 \sim^{1}_\Omega \dots \sim^{1}_\Omega v_n \sim^{1}_\Omega w$, therefore $w \sim^{1}_\Omega v_n \sim^{1}_\Omega \dots \sim^{1}_\Omega v_1 \sim^{1}_\Omega v$, thus $w \sim_\Omega v$. Hence $\sim_\Omega$ is symmetric.
    \item If $u \sim_\Omega v$ and $v \sim_\Omega w$, we have $n_1, n_2 \in \mathbb{N}$, and $ u_1, \dots, u_{n_1} \in \mathcal{A}^\star$, $ v_1, \dots, v_{n_2} \in \mathcal{A}^\star$, such that $u \sim^{1}_\Omega u_1 \sim^{1}_\Omega \dots \sim^{1}_\Omega u_{n_1} \sim^{1}_\Omega v$ and $v \sim^{1}_\Omega v_1 \sim^{1}_\Omega \dots \sim^{1}_\Omega v_{n_2} \sim^{1}_\Omega w$. It is then clear that $u \sim^{1}_\Omega u_1 \sim^{1}_\Omega \dots \sim^{1}_\Omega u_{n_1} \sim^{1}_\Omega v \sim^{1}_\Omega v_1 \sim^{1}_\Omega \dots \sim^{1}_\Omega v_{n_2} \sim^{1}_\Omega w $, and thus $u \sim_\Omega w$. Therefore $\sim_\Omega$ is transitive.
\end{itemize}

The relation $\sim_\Omega$ is reflexive, symmetric and transitive. Therefore it is an equivalence relation.
\end{proof}

\paragraph{$\sim_\Omega$ implies $\sim$}
\begin{proof}
Let $v, w \in \mathcal{A}^\star$. From Proposition~\ref{prop:concatenation}, we immediately get $v \sim^{1}_\Omega w  \Rightarrow v \sim_\mathcal{M} w$. Then we can prove by immediate induction that $ \forall n\in \mathbb{N}, v_1, \dots, v_n \in \mathcal{A}^\star$, $v \sim^{1}_\Omega v_1 \sim^{1}_\Omega \dots \sim^{1}_\Omega v_n \sim^{1}_\Omega w \Rightarrow v \sim w$, from which we deduce $ v \sim_\Omega w $ implies $v \sim w$.
\end{proof}

\subsection{Graph Construction Algorithm}
\label{appendix:graph_construction_algorithm}

\begin{algorithm}
\SetAlgoLined
\caption{Graph Construction}
\textbf{Input} Action set $A$\;
\textbf{Input} Equivalence set $\Omega$\;
\textbf{Input} Maximum tree depth $d$\;
 Initialize the graph $\mathcal{G} = (V, E)$ with $V=\{0\} $ and $ V = \emptyset$ \;
 Initialize the set of states to expand $\mathcal{S} = \{0\}$ \;
 Initialize the current tree depth $l=0$\; 
 Initialize a dictionary $\mathcal{E}$ which stores partial sequences of $ \Omega$ for each state of $V$ \;
\While {$l < d$ \KwSty{and} $\mathcal{S} \ne \emptyset$}{
    newStates = \{\} \;
    \For {each state in $\mathcal{S}$}{
        \For{each action in $A$}{
            \tcc{create a node corresponding to $T(state, action)$}
            \ShowLn \label{alg:expandNode} 
            newState = \FuncSty{expandNode}{(state, action, $\Omega$, $\mathcal{E}$)} \;
            \If{newState not in $V$}{
            \tcc{Because of sequence redundancies, the state may already appear in the graph.}
                $V \gets V \cup \{$newState$\}$ \;
                newStates $\gets$ newStates $\cup \{$newState$\}$ \;
            }
            $E \gets E \cup \{$(state, newState)$\}$ \;
            \tcc{Update the equivalences $\mathcal{E}$(newState) to account for the new ways of reaching newState}
            \ShowLn \label{alg:UpdateDic} 
            $\mathcal{E} \gets$ \FuncSty{UpdateDic}(newState, $\mathcal{E}$, $\Omega$) \;
        }
    }
    $l \gets l+1 $\;
    $\mathcal{S} \gets$ newStates \;
}
\tcc{Prune edges such that the resulting graph is a DAG.}
$T$ = \FuncSty{GraphToDAG}($\mathcal{G}$) \;
\textbf{Output} DAG $\mathcal{G}$\;
\label{alg:graph_construction}
\end{algorithm}

We present in Algorithm~1 an overview of the graph construction algorithm. It takes as input the action set $A$, the sequence equivalence set $\Omega$, and the desired depth $d$, and outputs a DAG. Informally, it starts from a graph $\mathcal{G} = (V, E)$ reduced to a root state $\{0\}$ and iteratively expands $\mathcal{G}$ until a distance $L$ to the root is reached. For a node $n \in V$ we store in $\mathcal{E}(v)$ sequences which reach $v$, and are prefixes of sequences of $\Omega$. When expanding a state $v \in V$ using an action $a \in A$ (Line.~\ref{alg:expandNode}), we look at every partial sequence $s \in \mathcal{E}(v)$. If $s . a$ is in $\Omega$, it means that we have found a redundant sequence. If the equivalent sequence has already been computed, it means that a node $u$ representing $T(v, s . a)$ has previously been added in $\mathcal{G}$. Otherwise, we add a new node $u$. In both case, we update the equivalences $\mathcal{E}(u)$ to account for the new ways of reaching $u$ (Line.~\ref{alg:UpdateDic}). 

\subsection{Graph Construction Complexity}
\label{appendix:graph_complexity}

As shown in Section~\ref{subsection:local_state_graph}, constructing the graph necessitates three intricate loops: The first one goes over every internal node $n \in V$, the second one loops over the set of actions $\mathcal{A}$, and the last one loops over every partial sequence which allows to reach $v$ from a parent node. Inside these three loops, one has to compare the partial sequence with every sequence of $\Omega$. As sequence length in $\Omega$ can be bounded by $d$, the complexity cost inside the tree loops is bounded by $O(\lvert \Omega \rvert d)$. The total complexity is therefore lower than $O(\lvert V \rvert \lvert \mathcal{A}\rvert \lvert \mathcal{A}\rvert^{d-1} \lvert \Omega \rvert d) = O(\lvert V \rvert \lvert \mathcal{A}\rvert^{d} \lvert \Omega \rvert d)$. As $\lvert V \rvert \leq \lvert \mathcal{A}\rvert^{d}$, the complexity can also be bounded by $ O(\lvert \mathcal{A}\rvert^{2d} \lvert \Omega \rvert d)$.

\subsection{Modified DQN}
\label{appendix:algorithm_dqn}

Our modified version of the DQN algorithm can be found at Algorithm~\ref{alg:dqn}.

\begin{algorithm}
\SetAlgoLined
\caption{Modified DQN}
Initialize replay memory $\mathcal{D}$ and $Q$-networks $Q_\theta$ and $Q_{\theta'}$\;
Determine local-dynamics graph $\gG$ and the associated optimal exploration policy $\pi^\star$\;
\For{\text{episode} = 1 to M}{
    Initialize new episode\;
    \For {t = 1 to T}{
        $\epsilon \leftarrow$ set new $\epsilon$ value with $\epsilon$-decay ($\epsilon$ usually anneals linearly or is constant)\;
        Initialize at empty sequence $v \leftarrow \Lambda$\; 
        \eIf{$\mathcal{U}([0,1])< \epsilon$}{Sample exploring action $a_t \sim \pi^\star(v, \cdot)$\;
        }{Select greedy action $a_t$\;
        }
        $v\leftarrow v.a_t$ (append $a_t$ to the end of sequence $v$)\;
        Execute $a_t$ and observe next state $s_{t+1}$ and reward $r_t$ \;
        Store ($s_t,a_t,r_t,s_{t+1} $) in replay buffer $\mathcal{D}$
        Update $\theta$ and $\theta'$ normally with minibatches from replay buffer $\mathcal{D}$\;
        \If{Length(v)=d}{Reset $v \leftarrow \Lambda$\;}
    }

}
 \label{alg:dqn}
\end{algorithm}

\subsection{Possible Extension to the Stochastic Case}
\label{appendix:stochastic_case}

In this section we discuss the possibility of extending EASEE to the case of MDPs with stochastic transitions. EASEE relies on three components: the formalization of action sequence equivalences (Def.~\ref{def:seq_eq}), the construction of a local-dynamics graph (Section~\ref{subsection:local_state_graph}), and the construction of a local exploration policy by solving a convex problem (Section~\ref{subsection:graph_to_policy}). We now detail for each step the necessary changes to adapt EASEE to $\mathcal{M} = (\mathcal{S},\mathcal{A},T,R, \gamma)$, a MDP with stochastic dynamics.

\begin{itemize}
    \item \textbf{Action Sequence equivalences}: the difference with the deterministic case here is that given an action $a\in \mathcal{A}$ and a state $s\in \mathcal{S}$, $T(s, a)$ is not a state but a distribution over the set of states $\mathcal{S}$. Therefore every equality considered in Section~\ref{subsection:equivalence_over_action_sequences} has now to be understood not as an equality between two states but between two distributions. Other than this the formalism can be kept identical. Intuitively, two sequences of actions are equivalent if they lead to the same state distribution from any given state, \textit{i.e.} if they produce the same effect everywhere.
    \item \textbf{Local-Dynamics Graph}: Here, the formalism can again be kept identical. A node in the local-dynamics graph will not represent a state anymore, but rather a distribution over $\mathcal{S}$.
    \item \textbf{Local Exploration Policy}: Solving directly the objective given in \eqref{eq:j_tilde} would lead to maximize the diversity among state distributions encountered. As is, it would not necessarily lead to a better diversity among states, as two different distributions can have an almost similar support. Therefore, adapting EASEE to a stochastic setting would require encoding additional priors about the distributions represented by the nodes of the local-dynamics graph, which we leave for future work. If we suppose that the distributions encountered have disjoint supports, and that their entropy is the same, EASEE can be applied without modification. 
\end{itemize}

\section{Experimental Details}
\label{appendix:experimentaldetails}

\subsection{Gridworlds}
\label{appendix:gridworlds}

We tested EASEE on the DoorKey task. An illustration of the initial state is given in Fig.~\ref{fig:doorkeyinit}. The agent is represented by the red triangle. The yellow key is necessary to open the yellow door. The two room are respectively $12\times17$ and $4\times17$ grids. The agent has $3249$ timesteps to reach the goal and receive a reward of $1$ before the environment is reset.

\begin{figure}[ht]
    \centering
    \includegraphics[width=0.5\linewidth]{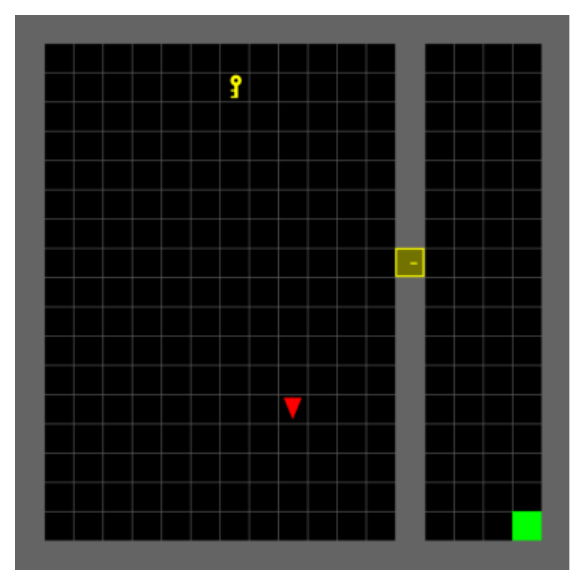}
    \caption{Example of initial state of DoorKey environment.}
    \label{fig:doorkeyinit}
\end{figure}

\subsection{Catcher}
\label{appendix:catcher}
The paddle is $1$ block wide. The environment is $60$ blocks wide and $30$ blocks high. The ball and the paddle both move at a rate of $1$ block per timestep, so each episode lasts $30$ timesteps.

We use the same architecture for the DQN with and without EASEE. Each observation is a $60 \times 30$ image. The feature extractor network is a CNN composed of 3 convolution layers with kernel size 3 followed by ReLU activation. In both cases, we update the online network every 4 timesteps, and the target network every $10^3$ timesteps. We use a replay buffer of size $10^4$, and sample batches of size 32. We use the Adam optimizer with a learning rate of $10^{-4}$. 

We train for $3.10^5$ timesteps. The exploration parameter $\epsilon$ is linearly annealed from 1 down to 0.05 over 20\% of the training period. Other DQN hyperparameters were defaults in~\citet{stable-baselines3}.

\subsection{Freeway}
\label{appendix:freeway}

\paragraph{Environment}

\begin{figure}[ht]
    \centering
    \includegraphics[width=0.5\linewidth]{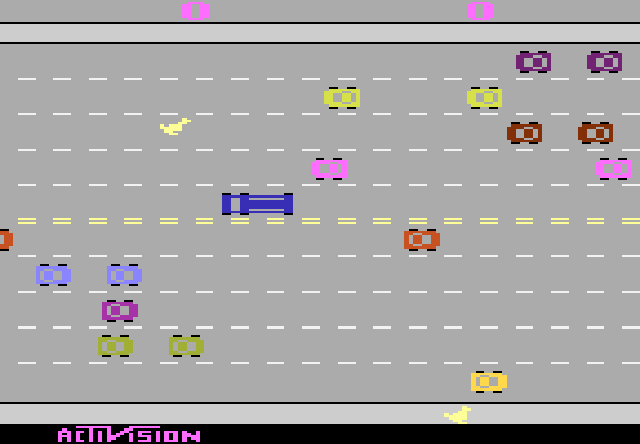}
    \caption{The Freeway environment from Atari 2600.}
    \label{fig:freeway_layout}
\end{figure}

In Freeway, the agent has to cross a road with multiple lanes without getting hit by the cars. It only receives a reward when it safely reaches the other side of the road. An illustration is given in Fig.~\ref{fig:freeway_layout}. The agent is represented by the yellow chicken.

We use the default preprocessing in \citet{stable-baselines3}, which follows guidelines of \citet{bellemare2013}. More precisely, the environment is initialized with a random number of up to 30 no-op actions. The frame is recast as a $84 \times 84 \times 3$ image, and the number of frames to skip between each observation is set to 4. The reward is scaled to [-1, 1]. An observation corresponds to 4 stacked game frames.

\paragraph{Architecture and hyperparameters}

We use the same architecture for the DQN with and without EASEE. Input images first go through a convolutional neural network, with the same architecture as in \citet{Mnih2015}. We update the online network every 4 timesteps, and the target network every $10^3$ timesteps. We use a replay buffer of size $10^5$, and sample batches of size 32. We use the Adam optimizer with a learning rate of $10^{-4}$.  

We train for $10^7$ timesteps. The exploration parameter $\epsilon$ is linearly annealed from 1 down to 0.01 over 10\% of the training period, which are the default in \citet{rl-zoo} for Atari games. Other DQN hyperparameters were defaults in~\citet{stable-baselines3}.

\subsection{Additional Experiments}
\label{appendix:additional_experiments}

\begin{figure}[ht]
\centering
\begin{subfigure}{.5\textwidth}
  \centering
  \includegraphics[width=1\linewidth]{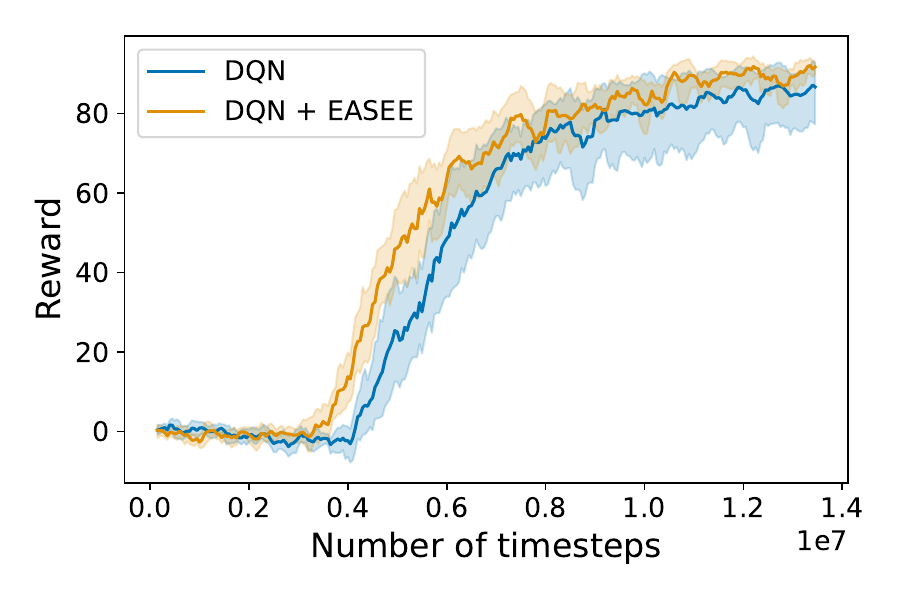}
  \caption{Boxing}
  \label{fig:boxing}
\end{subfigure}%
\begin{subfigure}{.5\textwidth}
  \centering
  \includegraphics[width=1\linewidth]{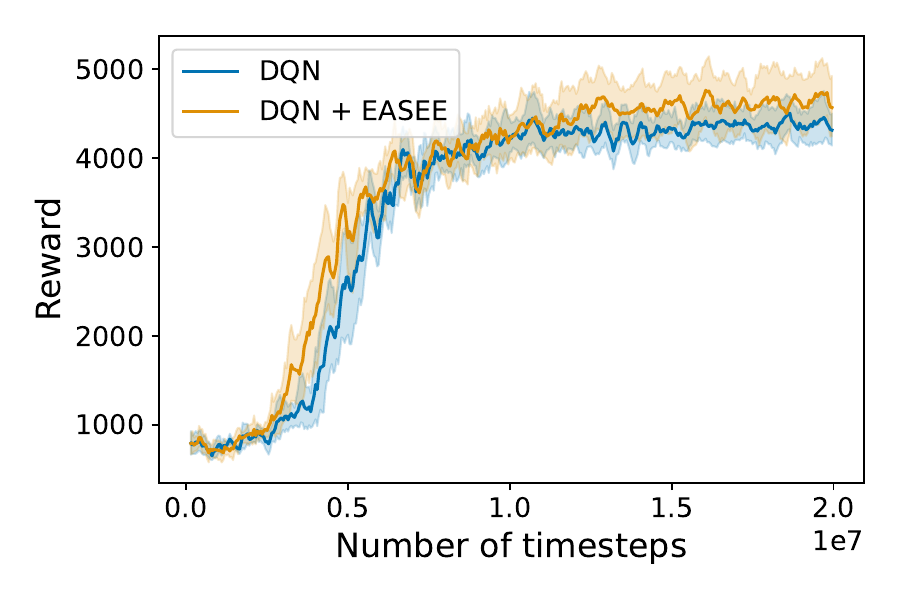}
  \caption{Carnival}
  \label{fig:carnival}
\end{subfigure}
\caption{Performances of DQN and DQN + EASEE on the Atari 2600 games Boxing, Carnival. A 95\% confidence interval over 10 random seeds is shown.}
\label{fig:threeatari}
\end{figure}

\paragraph{Environments}

We experimented EASEE on two other Atari environments, where the action sequence structures are less straight-forward. The three environments are preprocessed as explained in Appendix\ref{appendix:freeway}.

\begin{itemize}
    \item \textbf{Boxing}: This game shows a top-down view of two boxers. The player can move in all four directions, and punch his opponent (pressing the ``\texttt{FIRE}'' button). The action space is composed of 18 actions : 
\texttt{NOOP},
 \texttt{FIRE},
 \texttt{UP},
 \texttt{RIGHT},
 \texttt{LEFT},
 \texttt{DOWN},
 \texttt{UPRIGHT},
 \texttt{UPLEFT},
 \texttt{DOWNRIGHT},
 \texttt{DOWNLEFT},
 \texttt{UPFIRE},
 \texttt{RIGHTFIRE},
 \texttt{LEFTFIRE},
 \texttt{DOWNFIRE},
 \texttt{UPRIGHTFIRE},
 \texttt{UPLEFTFIRE},
 \texttt{DOWNRIGHTFIRE},
 \texttt{DOWNLEFTFIRE}. We incorporated priors by decomposing actions, in the form of \texttt{UPRIGHT} $\sim$ \texttt{UP}.\texttt{RIGHT}, \texttt{UPLEFT} $\sim$ \texttt{UP}.\texttt{LEFT}, \texttt{UPRIGHTFIRE} $\sim$  \texttt{UPRIGHT} . \texttt{FIRE}, \texttt{UPLEFTFIRE} $\sim$  \texttt{UPLEFT} . \texttt{FIRE}, \textit{etc}...

\item \textbf{Carnival}: The goal of the game is to shoot at targets,  which include rabbits, ducks, owls, scroll across the screen in alternating directions, and sometimes come at the player. The player can only move in 1 direction, such that the action space is composed of 6 actions:
[\texttt{NOOP}, \texttt{FIRE}, \texttt{RIGHT}, \texttt{LEFT}, \texttt{RIGHTFIRE}, \texttt{LEFTFIRE}]. As \texttt{NOOP} is not an useful action, EASEE could get an edge simply by adding the equivalence \texttt{NOOP} $\sim \Lambda$. For a fair comparison, we restricted the action space to meaningful actions by removing \texttt{NOOP} for both EASEE and the baseline. We limited ourselves to the commutative property of \texttt{RIGHT} and \texttt{LEFT}: \texttt{RIGHT} . \texttt{LEFT} $\sim$ \texttt{LEFT} . \texttt{RIGHT}.
\end{itemize}

\paragraph{Architecture and hyperparameters}
    We use the same architecture and the exact same DQN parameters as in Appendix\ref{appendix:freeway}. In all three environments, EASEE is used with a depth of 4.

\paragraph{Results}
We can see the results on Fig.\ref{fig:threeatari}. We can see a slight gain for Boxing, and a marginal improvement for Carnival. This can come from various factors:
\begin{itemize}
    \item When the number of action sequence equivalences considered is small compared to the number of actions, as it is the case for Carnival, the exploration policy computed with EASEE is very much like a uniform policy. It logically makes its performances converge toward those of a standard DQN.
    \item The action sequence equivalences considered here are only approximately true. In boxing, it is only approximately true that \texttt{UPRIGHT} $\sim$ \texttt{UP}.\texttt{RIGHT} for example. In Carnival, \texttt{RIGHT} and \texttt{LEFT} commute as long as the player is not at the edges of the screen, in which case \texttt{RIGHT} or \texttt{LEFT} could have no effect. In both cases, this induces a bias that may harm performance.
\end{itemize}

\end{document}